\providecommand{\SetAlgoLined}{\SetLine}
\def\ie{\emph{i.e., }}
\def\vs{\emph{vs. }}
\def\wrt{\emph{w.r.t. }}
\def\etal{\emph{et al. }}
\def\fw{Frank-Wolfe }
\DeclareMathOperator*{\argmin}{arg\,min}
\newcommand*{\rom}[1]{\expandafter\@slowromancap\romannumeral #1@}
\newcommand\footnoteref[1]{\protected@xdef\@thefnmark{\ref{#1}}\@footnotemark}
\newcommand{\bfsection}[1]{\vspace*{0.1cm}\noindent\textbf{#1.}}
\newcommand{\sgn}{\text{sgn}}
\newtheorem{defi}{Definition}
\newtheorem{thm}{Theorem}
\newcommand\fs@betterruled{%
  \def\@fs@cfont{\bfseries}\let\@fs@capt\floatc@ruled
  \def\@fs@pre{\vspace*{8pt}\hrule height.8pt depth0pt \kern2pt}%
  \def\@fs@post{\kern2pt\hrule\relax}%
  \def\@fs@mid{\kern2pt\hrule\kern2pt}%
  \let\@fs@iftopcapt\iftrue}
\begin{document}
%
\title{RNN Training along Locally Optimal Trajectories via \fw Algorithm}


\author{\IEEEauthorblockN{Yun Yue\IEEEauthorrefmark{1},
Ming Li\IEEEauthorrefmark{1},
Venkatesh Saligrama\IEEEauthorrefmark{2}
and
Ziming Zhang\IEEEauthorrefmark{1}}
\IEEEauthorblockA{\IEEEauthorrefmark{1}
Worcester Polytechnic Institute, 
Worcester, MA 01609
}
\IEEEauthorblockA{\IEEEauthorrefmark{2}
Boston University, Boston, MA 02215
}
Email: \{yyue,  mli12, zzhang15\}@wpi.edu, srv@bu.edu}

\maketitle

\begin{abstract}

We propose a novel and efficient training method for RNNs by iteratively seeking a local minima on the loss surface within a small region, and leverage this directional vector for the update, in an outer-loop. We propose to utilize the Frank-Wolfe (FW) algorithm in this context. Although, FW implicitly involves normalized gradients, which can lead to a slow convergence rate, we develop a novel RNN training method that, surprisingly, even with the additional cost, the overall training cost is empirically observed to be lower than back-propagation. Our method leads to a new \fw method, that is in essence an SGD algorithm with a restart scheme. We prove that under certain conditions our algorithm has a sublinear convergence rate of $O(1/\epsilon)$ for $\epsilon$ error. We then conduct empirical experiments on several benchmark datasets including those that exhibit long-term dependencies, and show significant performance improvement. We also experiment with deep RNN architectures and show efficient training performance. Finally, we demonstrate that our training method is robust to noisy data.
\end{abstract}


%
\IEEEpeerreviewmaketitle

\section{Introduction}\label{sec:intr}
Consider the problem of training RNNs based on minimizing empirical risk over minibatches, ${\cal B}$ that are sampled uniformly at random from training examples ${\cal X} \times {\cal Y}$ of feature-label pairs $(x,y)$ over $M$ time steps. Let us denote the instance at m-step as $x_m$, and the hidden state as $\mathbf{z}_m$. The batch-averaged empirical risk can be written as: 
\begin{align}\label{eqn:obj}
    & \min_{\omega} \left [F(\omega) \stackrel{def}{=} \mathbb{E}_{\mathcal{B}\sim\mathcal{X}\times\mathcal{Y}} f(\mathcal{B}; \omega) \stackrel{def}{=} \sum_{(x,y)\sim\mathcal{B}}\ell(y, \mathbf{z}_M; \omega_{\ell}) \right ] \nonumber \\
    & \mbox{s.t.} \,\,\,\mathbf{z}_m = h(x_m, \mathbf{z}_{m-1}; \omega_h), \,\,\,\forall\,\,\, m\in[M],
\end{align}
where $\omega=\{\omega_{\ell}, \omega_h\}$ denotes the RNN weights, 
$\ell, h$ denotes the loss and (nonconvex) state transmission functions parameterized by $\omega_{\ell}, \omega_h$, and $\mathbb{E}$ denotes the expectation operator.

\bfsection{Vanishing and Exploding Gradients}
Training stability of RNNs is regarded as a fundamental aspect, attracting much attention in the literature \cite{cohen1983absolute, guez1988stability, hopfield1982neural, hopfield1984neurons, kelly1990stability, matsuoka1992stability, hui1992dynamical, michel1990analysis, collins2016capacity, miller2018stable}. 
In this context, gradient explosion/decay is identified as one of the key reasons that prevent RNNs from being trained efficiently and effectively, where the gradient magnitude is either too small or too large, leading to severe training instability \cite{pascanu2013difficulty}. 
This issue has been attributed to:
\begin{itemize}[nosep, leftmargin=7mm]
    \item[{\em P1.}] The number of time steps, $M$, is large where long-term dependencies exist among the data;
    \item[{\em P2.}] The state transmission function, $h$, involves multiple hidden states such as in deep RNNs;
    \item[{\em P3.}] The data samples, $\mathcal{X}$, are very noisy or the true signal is weak.
\end{itemize}
\begin{wrapfigure}{r}{.4\linewidth}
	\vspace{-5mm}
	\begin{center}
		\includegraphics[width=\linewidth]{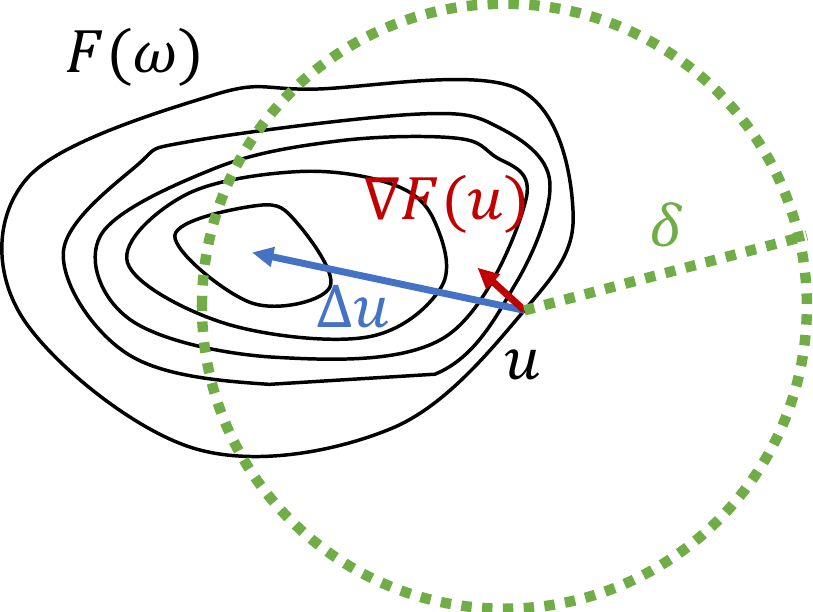}
		\vspace{-5mm}
		\caption{\footnotesize{Proposed method.}}
		\label{fig:idea}
	\end{center}
	\vspace{-15pt}
\end{wrapfigure}
\noindent
\bfsection{Proposed Method}
Different from prior works that propose methods based on novel designs \cite{chang2018antisymmetricrnn} or architectures \cite{hochreiter1997long} as a means to mitigate gradient decay or explosion, we propose to directly modify the well-known back-propagation algorithm. 

At a high-level, we propose to estimate the {\em stable} (approximate) gradients in RNNs. In Fig. \ref{fig:idea}, $u$ denotes the current realization for function $F(\omega)$ whose gradient is $\nabla F(u)$. $\Delta u$ denotes the desired output vector that points towards the local minimum from $u$, and $\delta\geq0$ denotes the radius of the search region in the parameter space centered at $u$ (denoted by the dotted circle). Obviously, $\nabla F(u)$ and $\Delta u$ could be quite different, and our goal is to learn $\Delta u$, by looking around in a sufficiently small neighborhood. 

\bfsection{Trust-Region \vs Projected Gradient \vs Frank-Wolfe}
All of the three methods are potentially applicable for our learning purpose. Trust-region optimization~\cite{byrd1987trust,fortin2004trust,alexandrov1998trust} usually utilizes quadratic approximation to locate a local minimum given the current solution. In deep learning, however, computing the Hessian matrix in the high dimensional space is very challenging. If simplified with the identity matrix, the corresponding closed-form solution is equivalent to the $\ell_2$-normalized gradient. Projected gradient \cite{boyd2004convex} for training RNNs may lead to slow convergence due to the vanishing/exploding gradients, because the inner loop for locating the local minimum relies on the RNN gradients as well. In contrast, we propose to utilize the Frank-Wolfe (FW) algorithm \cite{frank1956algorithm}, one of the simplest and earliest iterative algorithms for constrained optimization due to its projection-free property and large-scale applicability. 
FW maintains a feasible solution satisfying the constraints, without the need to explicitly incorporate the constraints. Nevertheless, we are generally agnostic to the approach used, and our focus is on RNN training methods that are based on locally optimized trajectory. We collect a few salient aspects of our method and FW in particular to further build intuition.

\bfsection{Normalized Gradient \vs Frank-Wolfe}
Since the magnitude of vanishing or exploding gradients limits RNN trainability, one plausible approach is to normalize gradients. While normalization has been been explored in practice for training deep models \cite{Zhang_2018_CVPR}, results in \cite{Zhang2020Why} demonstrate that the iteration complexity of $\ell_2$-normalized stochastic gradient descent (SGD) is $O(1/\epsilon^4)$ in order to achieve $\epsilon$-stationary solutions. In contrast, although normalization is inherent in the \fw algorithm as well \wrt $\ell_p (p\geq1)$-norm constraints, such normalization guarantees the solution per iteration to be inside the local region towards a local minimum. Additionally, we show that our proposed \fw algorithm converges faster. 

\bfsection{Stable Direction} 
Gradient is singularly local in information, and as such, it does not fully capture the landscape, say, within a small ball around the point. Obviously, if the gradients were small within the entire ball, this would correspond to essentially being on a flat surface. In other cases, we are often in a situation, where the gradient vectors change rapidly, and one can make progress ``if it was possible to look around in suitable neighborhood.'' A measure for changing landscape within a closed convex set, $\mathcal{C}$, for convex smooth functions is the curvature constant $M$ \cite{jaggi2010simple}. For a smooth convex function, $g(\cdot)$, its curvature is defined as the maximum Bregman Divergence, namely,
\begin{align}\label{eqn:M}
    M=\max_{\begin{array}{c} x,y,s \in \mathcal{C}\\ y=(1-\gamma)x + \gamma s \end{array}} \frac{2}{\gamma^2} (g(y)-g(x) - \langle \nabla g(x), y-x\rangle ).
\end{align}
Note that the curvature $M$ captures the notion of maximum error over all linear approximation errors within the set $\mathcal{C}$, and as such it can be significantly different from the gradient at any specific location. The notion of curvature is intimately tied to the Frank-Wolfe algorithm. A remarkable fact is that the Frank-Wolfe algorithm, is {\it affine invariant}, and with step size $\gamma_k=2/(k+1), \forall k\geq1$, asserts that, at iteration $k$, we can guarantee $g(x^k)-g(x^*) \leq \frac{M}{k+2}$, where $g(x^*)$ is the minimum in the set $\mathcal{C}$. Now, in our context of RNNs, the function $F(\omega)$ is not globally convex, and this convergence rate is no longer true globally. Nevertheless, if one were in a position that satisfies local convexity within a sufficiently small ball of radius $\delta$, it is then possible to assert this fact. While, even this cannot be guaranteed, the situation only demands that along the path trajectory of the updates, we are in a locally convex region. Additionally, one can often realize local convexity by adding a suitably small quadratic regularizer, which is adapted to the current location in the parameter space. In Sec.~\ref{sec:analysis}, under suitable technical conditions, which fall short of global convexity, we show that our proposed algorithm converges at a sublinear rate of $O(1/\epsilon)$ to achieve $\epsilon$ approximation error. 

The key insight from Eq. \ref{eqn:M} for training RNNs is that {\em for convergence guarantees, local curvature constant, rather than the noisy gradient at the current location is of relevance.} This provides us a means to mitigate vanishing gradients, through leveraging \fw algorithm.

\if0
When $\delta\rightarrow0$, maximizing the directional derivative over $\Delta \mathbf{u}$ gradually enforces $\Delta\mathbf{u}$ to be overlapped with the gradient $\nabla F(\mathbf{u})$, that is, the directions of both vectors being the same eventually. Such a motivation provides us an alternative for gradient stabilization, as in our case the gradient magnitude plays a much less important role in learning that helps avoid both vanishing and exploding gradients in training RNNs. Once $\Delta\mathbf{u}$ is learned, it can be fed into an adaptive learning rate optimizer such as Adam \cite{kingma2014adam} or RMSProp \cite{hinton2012neural} to update the weights in RNNs.
\fi 

\if0
\bfsection{Gradient mitigation}
Different from prior work that propose methods based on novel designs \cite{chang2018antisymmetricrnn} or architectures \cite{hochreiter1997long} as a means to mitigate gradient decay or explosion, we propose to directly modify the well-known back-propagation algorithm to mitigate gradient explosion or decay. 
Since the {\em magnitude} of vanishing or exploding gradients limits RNN trainability, one plausible approach is to normalize gradients. While normalization has been been explored in practice for training deep models \cite{Zhang_2018_CVPR}, results in \cite{Zhang2020Why} demonstrate that the iteration complexity of $\ell_2$ normalized stochastic gradient descent (SGD) is $O(1/\epsilon^4)$, to achieve $\epsilon$-stationary solutions. This serves as a motivation for our proposed method, namely, whether  
{\em can one develop an algorithm, that still incorporates some form of normalization, and leads to faster convergence?} 
\fi

\bfsection{Contributions}
Our key contributions in this paper are:
\begin{itemize}[leftmargin=*]
    \item We propose a novel yet simple RNN optimizer based on the \fw method;
    
    \item We theoretically analyze the convergence of our algorithm and its benefits in RNN training;
    
    \item We empirically conduct comprehensive experiments to demonstrate the effectiveness and efficiency of our algorithm in various settings that cover all the scenarios of P1, P2, P3.
    
\end{itemize}

\section{Related Work}
Below we only summarize the related works in the literature of RNNs.

    \bfsection{Optimization in RNNs} 
    Truncated backpropagation through time (TBPTT) \cite{jaeger2002tutorial} is a widely used technique to avoid vanishing/exploding gradients in RNNs by controlling the maximum number of time steps in gradient calculation, although it has been demonstrated to be not robust to long-term dependencies \cite{tallec2017unbiasing}. Gradient clipping \cite{pascanu2013difficulty, li2018independently} is another common technique to prevent the exploding (not vanishing) gradients\footnote{Therefore, in practice vanishing gradients are more often for performance degradation of RNNs.} by, for instance, rescaling gradients when their norms are over a predefined threshold. It has been proved in \cite{Zhang2020Why} that gradient clipping can accelerate the training of deep neural networks. The scheme of initializing recurrent weight matrices to be identity or orthogonal has been widely studied such as \cite{le2015simple, arjovsky2016unitary, jing2017tunable, jose2017kronecker, mhammedi2017efficient, wisdom2016full, vorontsov2017orthogonality}. Weight matrix reparametrization has been explored as well \cite{zhang2018stabilizing}. Some other optimization approaches for training RNNs are proposed as well such as real-time recurrent learning (RTRL) \cite{williams1989learning}. To the best of our knowledge, trust-region (although it has been explored in other applications of deep learning such as reinforcement learning \cite{schulman2015trust}) or projected gradient methods have not been studied widely as an RNN optimizer, which we consider as one of our future works.
    
    \bfsection{Novel Network Architecture Development for RNNs} 
    Recently there are significant amount of works on developing variants of RNNs such as, just to name a few, long short-term memory (LSTM) \cite{hochreiter1997long}, gated recurrent unit (GRU) \cite{cho2014properties, kanai2017preventing}, Fourier recurrent unit \cite{zhang2018learning}, UGRNN \cite{2016arXivUGRNN}, FastGRNN \cite{kusupati2018nips}, unitary RNNs \cite{arjovsky2016unitary, jing2017tunable, 2018SpectralRNN, 2017oRNN, pennington17}, deep RNNs \cite{pascanu2013construct, zilly2017recurrent, mujika2017fast}, linear RNNs \cite{quasirnn, lei2018sru, balduzzi2016strongly}, residual/skip RNNs \cite{Jaeger07,Bengio2013AdvancesIO,chang2017dilated,campos2017skip,kusupati2018nips}, ordinary differential equation (ODE) based RNNs \cite{talathi2015improving, niu2019recurrent, chang2018antisymmetricrnn, kusupati2018nips, chen2018neural, Yulia2019OdeRNN, kag2019rnns}. For instance, FastGRNN \cite{kusupati2018nips} feed-forwards state vectors to induce skip or residual connections, to serve as a middle ground between feed-forward and recurrent models, and to mitigate gradient decay. Incremental RNNs (iRNNs) \cite{kag2019rnns} are developed based on ODE with theoretical guarantee of identity gradients in the intermediate steps in chain rule for gradient calculation. Independently RNN (IndRNN) \cite{li2018independently} is a network structure where neurons in the same layer are independent of each other and multiple IndRNNs can be stacked to construct a deep network.

\section{\fw RNN Optimizer}\label{sec:method}

\begin{algorithm}[t]
    \SetAlgoLined
    \SetKwInOut{Input}{Input}\SetKwInOut{Output}{Output}
    \Input{objective $f$, norm $p$, local radius $\delta_t, \forall t$, max numbers of iterations $K, T$}
    \Output{RNN weights $\omega$}
    \BlankLine
    Randomly initialize $\omega_0$;
    
    \For{$t=1,\cdots,T$}{
        $\Delta\omega_{t,0}\leftarrow\mathbf{0}$; 
        
        \For{$k=1,\cdots,K$}{
            
            $s_{t,k}\leftarrow\argmin_{s\in\mathcal{C}(p,\delta_t)}\langle s, \nabla_{\Delta\omega} F(\omega_{t-1}+\Delta\omega_{t,k-1})\rangle$;
            
            $\Delta\omega_{t,k} \leftarrow (1-\frac{1}{k})\Delta\omega_{t,k-1} + \frac{1}{k}s_{t,k}$;

            
        }
        
    $\omega_t \leftarrow \omega_{t-1} + \eta\Delta\omega_{t,K}$; 
    }
    \Return $\omega_T$;
    \caption{Frank-Wolfe RNN Optimizer}\label{alg:fw-rnn}
\end{algorithm}

Recall that in our proposed method, given the current point $\omega$ in the parameter space, we attempt to solve a local minima in small ball around $\omega$. To this end, we consider the following general constrained optimization problem, which is also a  central aspect of the Frank-Wolfe method:
\begin{align}\label{eqn:inner-loop}
    \min_{\|\Delta\omega\|_p \leq \delta} F(\omega+\Delta\omega), 
\end{align}
where $p>1$ is any $\ell_p$ norm. 
For our proposed method (see Alg. \ref{alg:fw-rnn}), this constrained optimization is carried out in an inner loop, by means of the FW method. Once, a satisfactory point is reached (which usually is quite fast), we then update the parameters in an outer loop. 
\if0
Following the notations in Eq.~\ref{eqn:obj}, the directional derivative of $F$ along the (unit) vector $\Delta\omega$ at the current solution $\Tilde{\omega}$, denoted by $D_{\Delta\omega} F(\Tilde{\omega})$, can be defined as
\begin{align}\label{eqn:D}
    D_{\Delta\omega} F(\Tilde{\omega}) = 
     = \langle\nabla F(\Tilde{\omega}), \Delta\omega\rangle, 
\end{align}
where $\nabla F(\Tilde{\omega})\equiv\nabla_{\omega} F(\omega)|_{\omega=\Tilde{\omega}}$ denotes the gradient of $F$ over parameter $\omega$ at $\Tilde{\omega}$. Maximizing the directional derivative is equivalent to minimizing the angle between the gradient and the unit vector, and thus, 
\begin{align}
    \max_{\|\Delta\omega\|_2=1} D_{\Delta\omega} F(\Tilde{\omega}) \Leftrightarrow \Delta\omega = \frac{\nabla F(\Tilde{\omega})}{\|\nabla F(\Tilde{\omega})\|_2}, \nonumber
\end{align}
where $\|\cdot\|_2$ denotes the $\ell_2$ norm of a vector.
\fi


\bfsection{\fw Algorithm}
We apply the stochastic \fw algorithm to solve Eq. \ref{eqn:inner-loop}, and list our novel and simple \fw based RNN optimizer in Alg. \ref{alg:fw-rnn}. In contrast to GD, the iterations in the \fw algorithm are as follows, in general:
\begin{equation}\label{eqn:FW}
\begin{gathered}
        s^{(k-1)} =\argmin_{s\in\mathcal{C}}\left\langle s, \nabla f(x^{(k-1)})\right\rangle,\\ x^{(k)} = (1-\gamma_k)x^{(k-1)} + \gamma_k s^{(k-1)}, 
\end{gathered}
\end{equation}
where $\langle\cdot,\cdot\rangle$ denotes the inner product of two vectors, and $\gamma_k\in[0,1]$. For $\ell_p (p\geq1)$-norm constraints, \ie $\mathcal{C}(p, \delta)=\{s \mid \|s\|_p\leq\delta\}$, there exist close-form solutions for $\argmin$ in Eq. \ref{eqn:FW}, as
\begin{equation}
\begin{gathered}
    s^{(k-1)} = -\alpha\cdot \sgn(\nabla f(x^{(k-1)})) \cdot |\nabla f(x^{(k-1)})|^{\frac{p}{q}}, \\ \mbox{s.t.} \; 1/p + 1/q = 1,
\end{gathered}
\end{equation}
where $\sgn$ and $|\cdot|$ denote entry-wise sign and absolute operators, respectively, and $\alpha\geq0$ is a scalar satisfying $\|s^{(k-1)}\|_q = \delta$. In particular,
\begin{itemize}[leftmargin=*]
    \item $p=2 \Rightarrow s^{(k-1)} = -\delta \cdot \frac{\nabla f(x^{(k-1)})}{\|\nabla f(x^{(k-1)})\|_2}$, \ie $\delta$-scaled $\ell_2$ normalized gradient;
    
    \item $p\rightarrow\infty \Rightarrow s^{(k-1)} = -\delta \cdot \sgn(\nabla f(x^{(k-1)}))$, \ie $\delta$-scaled sign gradient.
\end{itemize}

In Alg. \ref{alg:fw-rnn} we set $\gamma_k = \frac{1}{k}$ to ensure that $\Delta\omega$ always lies inside the local region with convergence guarantee \cite{reddi2016stochastic}. Similarly we decrease $\delta_t$ with the learning rate. Note that Adam \cite{kingma2014adam} can be used to update $\omega$. Besides, Alg. \ref{alg:fw-rnn} can be applied to train other deep models as well. Similar two-loop algorithmic structures have been explored in the literature such as \cite{zhang2019time} for training convolutional neural networks (CNNs). 

In this paper we focus on the analysis and experiments with $p=2$, although our analysis generally holds for $p\geq1$. In practice, we utilize $p\rightarrow\infty$ to train IndRNN \cite{li2018independently} and achieve $98.37\%$ on Pixel-MNIST with $K=30$ over 400 epochs, similar to the numbers in Table \ref{tab:indRNN}.

\section{Analysis}\label{sec:analysis}
Zhu \etal in \cite{allen2019convergence} proved that for training RNNs, when the number of neurons is sufficiently large, meaning {\em polynomial} in the training data size and in network depth, then SGD is capable of minimizing the regression loss in the linear convergence rate. In contrast, we discuss the convergence of Alg.~\ref{alg:fw-rnn} for an arbitrary RNN. We start our analysis from a special case with $\ell_2$-normalized gradients as follows:
\begin{thm}[$\ell_2$-Normalized Gradients]
Suppose that the following assumptions hold globally: 
\begin{itemize}[leftmargin=*]
    \item $F$ is lower-bounded, twice differentiable, and $(L_0, L_1)$-smoothness, \ie $\|\nabla^2 F(\omega)\|_2 \leq L_0 + L_1\|\nabla F(\omega)\|_2, \forall \omega, \exists L_1,L_2>0$;
    
    \item There exist a $\tau>0$ such that $\|\nabla f(\mathcal{B}, \omega) - \nabla F(\omega)\|_2\leq\tau, \forall\mathcal{B}, \forall \omega$ holds.
\end{itemize}
Then for $K=1$ in Alg. \ref{alg:fw-rnn}, there exists $\delta>0$ so that in order to achieve $\epsilon$-stationary points the iteration complexity for Alg.~\ref{alg:fw-rnn} is upper bounded by $O(1/\epsilon^4)$, at least.
\end{thm}
To prove this theorem, please refer to Thm. 7 in \cite{Zhang2020Why}. Below we extend our analysis to general cases.

\begin{defi}[Star-convexity \cite{zhou2018sgd}]
Let $\omega^*$ be a global minimizer of a smooth function $F$. Then, $F$ is said to be star-convex at a point $\omega$ provided that $F(\omega) − F(\omega^*) + \langle \omega^* − \omega, \nabla F(\omega)\rangle\leq 0, \forall \omega$.
\end{defi}

\begin{thm}[Convergence]\label{thm:convergence}
Let $\{\omega_t\}_{t\in[T]}$ be the sequence of the weight updates from Alg.~\ref{alg:fw-rnn}. Suppose 
\begin{itemize}[leftmargin=7mm]
    \item[A1.] $F$ in Eq. \ref{eqn:obj} is {\em locally convex} within each radius $\delta_t, \forall t\in[T]$ \wrt $\ell_2$ norm centered at $\omega_{t-1}$;
    
    \item[A2.] $F$ in Eq. \ref{eqn:obj} is also {\em star-convex}, given a global minimizer $\omega^*$, \ie $F$ is lower bounded by $F(\omega^*)$;
    
    \item[A3.] $F$ in Eq. \ref{eqn:obj} is differentiable and its gradient is Lipschitz continuous with constant $L > 0$, \ie $\|\nabla F(\omega_1) - \nabla F(\omega_2)\|_2 \leq L\|\omega_1 - \omega_2\|_2, \forall \omega_1, \omega_2$;
    
    \item[A4.] $\omega_t, \forall t$ is upper bounded \wrt $\omega^*$, \ie $\|\omega_t - \omega^*\|_2\leq \alpha < +\infty, \forall \omega, \exists \alpha$; 
    
    \item[A5.] It holds that $\beta\leq\|\nabla F(\omega_{t-1}) - \Delta\omega_{t,K}\|_2 \leq (1-L\eta)^{\frac{1}{2}}\|\Delta\omega_{t,K}\|_2, \forall t, \exists\beta>0, \exists\eta\leq\frac{1}{L}$. 
    
\end{itemize}
Then we have that the output of Alg. \ref{alg:fw-rnn}, $\omega_T$, satisfies
\begin{align}\label{eqn:sublinear}
    F(\omega_T) - F(\omega^*) \leq \frac{\|\omega_0 - \omega^*\|_2^2 + 2\eta \rho}{2\eta T}, 
\end{align}
where $\rho=\left(\frac{\alpha}{\beta}-\frac{\eta}{2}\right)(1 - L\eta)\sum_t\|\Delta\omega_{t,K}\|_2^2\in\mathbb{R}$, \ie a real number, and $\omega_0$ denotes the initialization of the network weights. In particular, $\omega_T$ will converge to $\omega^*$ {\em asymptotically} if $\lim_{T\rightarrow+\infty}\sum_{t=1}^T\delta_t^2<+\infty$ holds. Further, if $0 \leq \|\omega_0 - \omega^*\|_2^2 + 2\eta\rho <+\infty$ holds, then $\omega_T$ will converge to $\omega^*$ {\em sublinearly}. 
\end{thm}
\begin{proof}
Based on Assmp. A1, A3 and A5, we have
\begin{align}\label{eqn:local-convexity}
    & F(\omega_t) \leq F(\omega_{t-1}) + \langle\nabla F(\omega_{t-1}), \omega_t-\omega_{t-1}\rangle + \frac{L}{2}\|\omega_t-\omega_{t-1}\|_2^2  \nonumber\\
    &= F(\omega_{t-1}) - \eta\langle\nabla F(\omega_{t-1}), \Delta\omega_{t,K}\rangle + \frac{L\eta^2}{2}\|\Delta\omega_{t,K}\|_2^2
    \nonumber\\
    &= F(\omega_{t-1}) + \frac{\eta}{2}\|\nabla F(\omega_{t-1}) - \Delta\omega_{t,K}\|_2^2 \nonumber\\
    &-\frac{\eta}{2}\|\nabla F(\omega_{t-1})\|_2^2 + \frac{L\eta^2-\eta}{2}\|\Delta\omega_{t,K}\|_2^2\leq F(\omega_{t-1}).
\end{align}
Further, based on Assmp. A2, A4 and A5, we have
\begin{align}\label{eqn:star-convexity}
    & F(\omega_t) - F(\omega^*) \nonumber \leq \langle\nabla F(\omega_{t-1}), \omega_{t-1}-\omega^*\rangle - \frac{\eta}{2}\|\nabla F(\omega_{t-1})\|_2^2 \nonumber\\&+ \frac{\eta}{2}\|\nabla F(\omega_{t-1}) - \Delta\omega_{t,K}\|_2^2 + \frac{L\eta^2-\eta}{2}\|\Delta\omega_{t,K}\|_2^2 \nonumber \\
    &= \frac{1}{2\eta}\Big(\|\omega_{t-1} - \omega^*\|_2^2 - \|\omega_t - \omega^*\|_2^2\Big) \nonumber\\&+ \langle\nabla F(\omega_{t-1})-\Delta\omega_{t,K}, \omega_t-\omega^*\rangle + \frac{L\eta^2-\eta}{2}\|\Delta\omega_{t,K}\|_2^2 \nonumber \\
    &\leq \frac{1}{2\eta}\Big(\|\omega_{t-1} - \omega^*\|_2^2 - \|\omega_t - \omega^*\|_2^2\Big) \nonumber\\&+ \left(\frac{\alpha}{\beta}-\frac{\eta}{2}\right)(1 - L\eta)\|\Delta\omega_{t,K}\|_2^2.
\end{align}
Now based on Eq. \ref{eqn:local-convexity} and Eq. \ref{eqn:star-convexity}, we can complete our proof by
\begin{align}
    &F(\omega_T) - F(\omega^*) \leq \frac{1}{T}\sum_t F(\omega_t) - F(\omega^*) \nonumber\\&\leq \frac{\|\omega_0 - \omega^*\|_2^2}{2\eta T} + \frac{1}{T}\left(\frac{\alpha}{\beta}-\frac{\eta}{2}\right)(1 - L\eta)\sum_t\|\Delta\omega_{t,K}\|_2^2. \nonumber
\end{align}
\end{proof}

Equivalently, Thm. \ref{thm:convergence} states that our algorithm needs $O(\frac{1}{\epsilon})$ updates of $\omega$, independent on the inner loops $K$ in Alg. \ref{alg:fw-rnn}, in order to achieve an $\epsilon$-stationary solution. Note that the constant $\rho$ is highly correlated with the number of inner loops, $K$, in the algorithm. Therefore, empirically it is challenging to tell which choice of $K$ will be the best. From our experiments, we found that often small $K$'s can work better than SGD for training RNNs.

\begin{wrapfigure}{r}{.5\linewidth}
	\vspace{-10mm}
	\begin{center}
		\includegraphics[width=\linewidth]{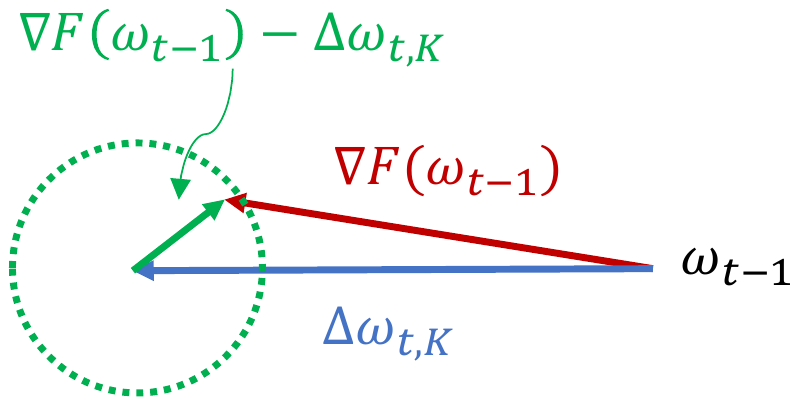}
		\vspace{-5mm}
		\caption{\footnotesize{Geometric interpretation.}}
		\label{fig:A5}
	\end{center}
	\vspace{-10pt}
\end{wrapfigure}
\bfsection{Geometric Interpretation of Assmp. A5}
In Fig. \ref{fig:A5}, we illustrate the geometric relationship between $\nabla F(\omega_{t-1}), \Delta\omega_{t,K}$ and $\nabla F(\omega_{t-1}) - \Delta\omega_{t,K}$ given the current solution $\omega_{t-1}$ and $\beta\rightarrow0$. Clearly, any point within the green dotted circle (with radius $(1-L\eta)^{\frac{1}{2}}\|\Delta\omega_{t,K}\|_2$) will be a candidate for $\nabla F(\omega_{t-1})$ so that A5 holds. Therefore, the angle between any pair of $\nabla F(\omega_{t-1})$ and $\Delta\omega_{t,K}$ should be no more than $\pm 45^o$. In other words, $\nabla F(\omega_{t-1})$ and $\Delta\omega_{t,K}$ should have very similar directions. We verified this on the HAR-2 dataset (see Sec. \ref{sec:exp}) by computing such angles to see the distribution using 200 epochs. Statistically these angles are within $\pm 45^o$ with mean $-4.19^o$ and std $0.51^o$.

\bfsection{Inexact Update in Frank-Wolfe}    
In the stochastic setting with limited $K$ and large-scale data, however, $\nabla F(\omega_{t-1}), \Delta\omega_{t,K}$ are infeasible to compute exactly. To address this problem, we borrow the concept of approximation quality in the inexact \fw method in \cite{jaggi2013revisiting}.
\begin{defi}[\cite{jaggi2013revisiting}]
In the \fw update, $s_{t,k} (\|s_{t,k}\|_2\leq\delta_t)$ is used so that it holds that
\begin{equation}\label{eqn:approximation}
\begin{gathered}
    \langle s_{t,k}, \nabla F(\omega_{t-1}+\Delta\omega_{t,k-1})\rangle \leq \\ \min_{\|s\|_2\leq \delta_t} \langle s, \nabla F(\omega_{t-1}+\Delta\omega_{t,k-1})\rangle + \frac{\lambda M_F}{k+1}, \; \forall k\geq 1,
\end{gathered}
\end{equation}
where $\lambda\geq0$ denotes an arbitrary accuracy parameter that controls the upper bound of the convergence rate linearly from factor $1$ to $(1+\lambda)$, and $M_F$ denotes the curvature constant of $F$.
\end{defi}
Note that Eq. \ref{eqn:approximation} will still hold by setting
    $\lambda = \max_k\left\{\frac{2\delta_t(k+1)}{M_F}\|\nabla F(\omega_{t-1}+\Delta\omega_{t,k-1})\|_2\right\}<+\infty.$
Clearly, as long as $\|\nabla F(\omega_{t-1}+\Delta\omega_{t,k-1})\|_2=O(1/k)$ is met, the convergence of Alg. \ref{alg:fw-rnn} can be always guaranteed. 

\bfsection{Stochastic Frank-Wolfe in Alg. \ref{alg:fw-rnn}}
In our implementation, we utilize the stochastic Frank-Wolfe method \cite{reddi2016stochastic} instead for computational efficiency. Essentially stochastic FW can be considered as a realization of inexact update in FW, and thus all the analysis above holds for this case as well.

\begin{table}[hbt!]
  \caption{Dataset Statistics}
    \label{tab:data}
  \centering
 \begin{tabular}{l r c c c}
      \toprule 
      \textbf{Dataset} & \textbf{\#Train} & \textbf{\#Test}  & \textbf{\#TimeStep}  & \textbf{\#Feature}\\
      \midrule 
      HAR-2 & 7,352 & 2,947 & 128 & 9\\
      Noisy-HAR-2 & 7,352 & 2,947 & 128 & 9\\
      Pixel-MNIST & 60,000 & 10,000 & 784 & 1\\
      Permute-MNIST & 60,000 & 10,000 & 784 & 1\\
      \bottomrule 
    \end{tabular}
\end{table} 

\section{Experiments}\label{sec:exp}
\textbf{Datasets.} 
We test our RNN optimizer on the following benchmark datasets with all the statistics listed in Table \ref{tab:data}:
\begin{itemize}[leftmargin=*]
    \item {\em Adding task:} We strictly follow the adding task\footnote{\url{https://github.com/rand0musername/urnn}} defined by \cite{arjovsky2016unitary, hochreiter1997long} to generate the dataset. There are two sequences with length $T=100$. The first sequence is sampled uniformly at random $\mathcal{U}[0,\,1]$. The second sequence is filled with 0 except for two entries of 1. The two entries of 1 are located uniformly at random position $i_1, i_2$ in the first half and second half of the sequence. The prediction value is the sum of the first sequence between $[i_1, i_2]$. 

    \item {\em Pixel-MNIST \& Permute-MNIST:} Pixel-MNIST refers to pixel-by-pixel sequences of images in MNIST where each 28 $\times$ 28 image is flattened into a 784 time sequence vector, while a random permutation to the Pixel-MNIST is applied to generate a harder time sequence dataset as Permute-MNIST. All datasets are normalized as zero mean and unit variance during training and prediction.
    
    \item {\em HAR-2 \cite{kusupati2018nips}:} HAR-2 was collected from an accelerometer and gyroscope on a Samsung Galaxy S3 smartphone, and all samples are normalized with zero mean and unit variance.
    
    \item {\em Noisy-HAR-2:} This dataset is generated by adding Gaussian noise to HAR-2 with a mean of zero and a variance of two to HAR-2 to evaluate the robustness of our algorithm.
\end{itemize}

\bfsection{Baseline Algorithms and Implementation}
We compare our \fw based RNN optimizer with SGD (gradient clipping involved if necessary) and TBPTT comprehensively, using (1) a vanilla RNN with one-layer transition function consisting of a linear function followed by a $\tanh$ activation (same as the literature. See \cite{kusupati2018fastgrnn}) and (2) IndRNN with six layers\footnote{\url{https://github.com/Sunnydreamrain/IndRNN_pytorch}}. Mean-square-error (MSE) and cross-entropy losses are applied for binary and multi-class classification tasks, respectively. Adam \cite{kingma2014adam} is used as the optimizer for all the methods. We implement our experiments using PyTorch, and run all the experiments on an Nvidia GeForce RTX 2080 Ti GPU with CUDA 10.2 and cuDNN 7.6.5 on a machine with Intel Xeon 2.20 GHz CPU with 48 cores. The code of our optimizer can be found here \footnote{\url{https://github.com/YunYunY/FW_RNN_optimizer}}.

\bfsection{Hyperparameters}
The hyperparameters of the vanilla RNN and IndRNN are the same as the literature \cite{Kag2020RNNs, kusupati2018fastgrnn, li2018independently}, if applicable, as from our experiments they seem to be the best settings. For our optimizer we perform a grid search over several learning rates \{2e-5, 2e-4, 6e-4, 1e-3\}, batch size \{32, 64, 128, 256, 512\} and learning rate decay factor \{0.1, 0.5, 0.9\} on validation data (from a small portion of training data) to ensure that a good parameter is used in our algorithm. We use hidden dimension 128 for MNIST datasets and the Adding task. When training HAR-2 related tasks, we use hidden dimension 80. We report the best test accuracy.





\subsection{Results}
\begin{wrapfigure}{r}{5cm}
	\vspace{-10mm}
	\begin{center}
		\includegraphics[width=\linewidth]{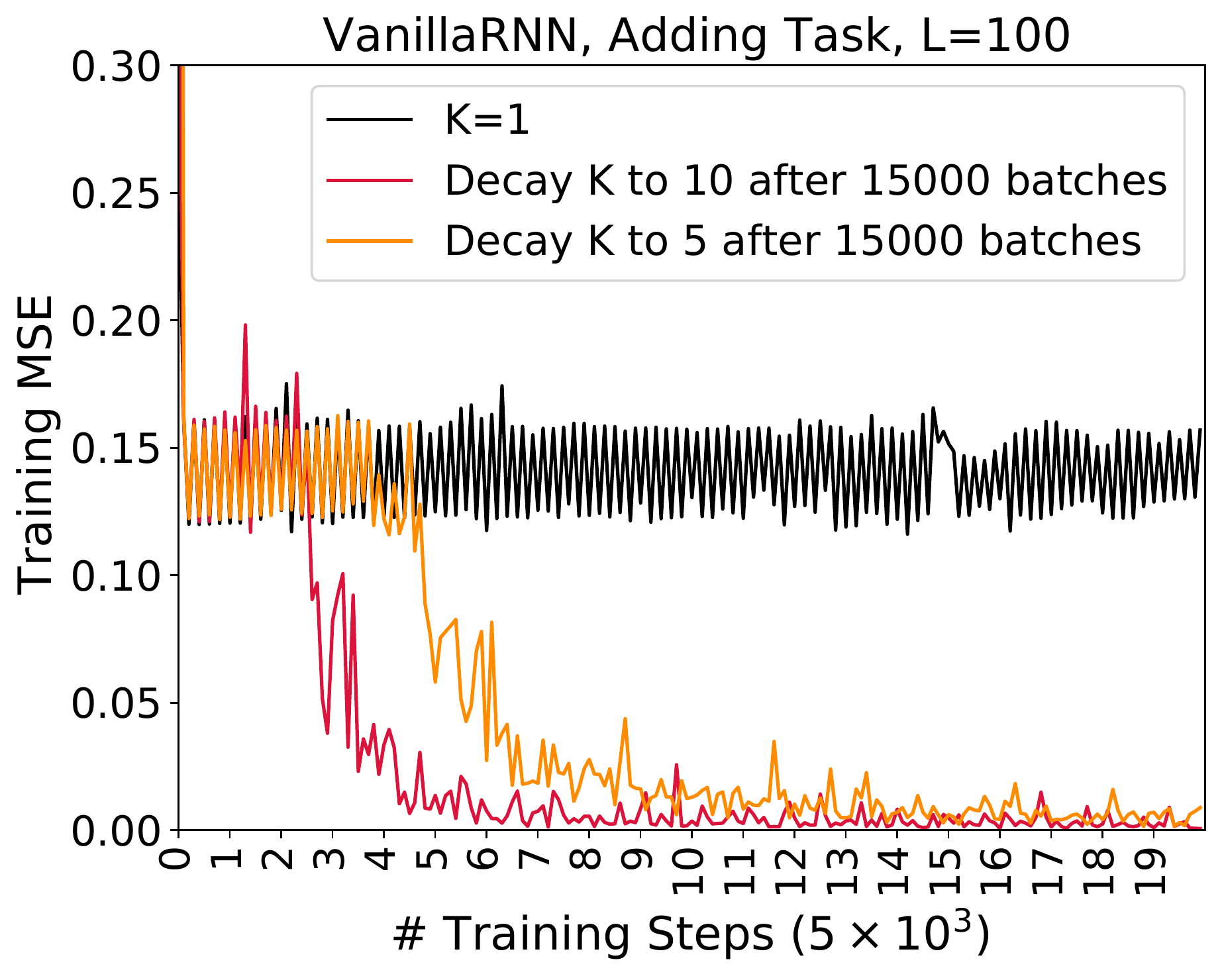}
		\vspace{-5mm}
		\caption{Training loss of Adding Task}
		\label{fig:addtask}
	\end{center}
	\vspace{-3mm}
\end{wrapfigure}

\textbf{Adding Task.}
The Adding task is designed to evaluate the capability of RNNs to capture long-term dependency among the data \cite{2018SpectralRNN, li2018independently, le2015simple, arjovsky2016unitary, arjovsky2016unitary, hochreiter1997long}. Fig. \ref{fig:addtask} illustrates the loss change of our algorithm compared with SGD on this task when the time sequence is long. It is clear that our algorithm converges after a reasonable number of iterations while SGD lost the learning ability in this task. We hypothesize that at the beginning all the algorithms search for a good direction within a certain region. Given sufficient updates later, our algorithm starts to move towards informative directions, leading to significantly fast convergence. 

\bfsection{Vanilla RNN on Pixel-MNIST \& Permute-MNIST}
We further apply our algorithm to Pixel-MNIST and Permute-MNIST and compare it with Adam and TBPTT. The forward and backward steps of TBPTT are set to 196, which means each 784 time sequence is partitioned into 4 segments. For our algorithm, we perform the inner iteration $K$=1 and 5. Since our algorithm can be easily combined with TBPTT, we also include the combination experiments on the two datasets. 

Fig. \ref{fig:p1_pixel} shows the change of training cross-entropy and test accuracy of RNN with the epoch for Pixel-MNIST and Permute-MNIST. Without extra optimization techniques, SGD shows no convergence or very slow convergence. We observe that TBPTT does help the convergence for the Permute-MNIST, however, the performance of TBPTT is only slightly better than the baseline SGD in the Pixel-MNIST case with sporadically increases and decreases of loss. As a contrast, our algorithm shows a faster convergence rate and a much more stable performance on both datasets. When TBPTT is combined with our algorithm, the model achieves faster convergence and higher test accuracy than the baseline for Pixel-MNIST. As for Permute-MNIST, the combination method eventually reaches higher test accuracy with more training epochs. It is worth mentioning that when the inner iteration $K$ increases in our algorithm, the total number of gradient updates needed for convergence decreases. 




\begin{figure}[t]

	\begin{minipage}[b]{0.49\columnwidth}
		\begin{center}
			\centerline{\includegraphics[height=3.5cm, keepaspectratio,]{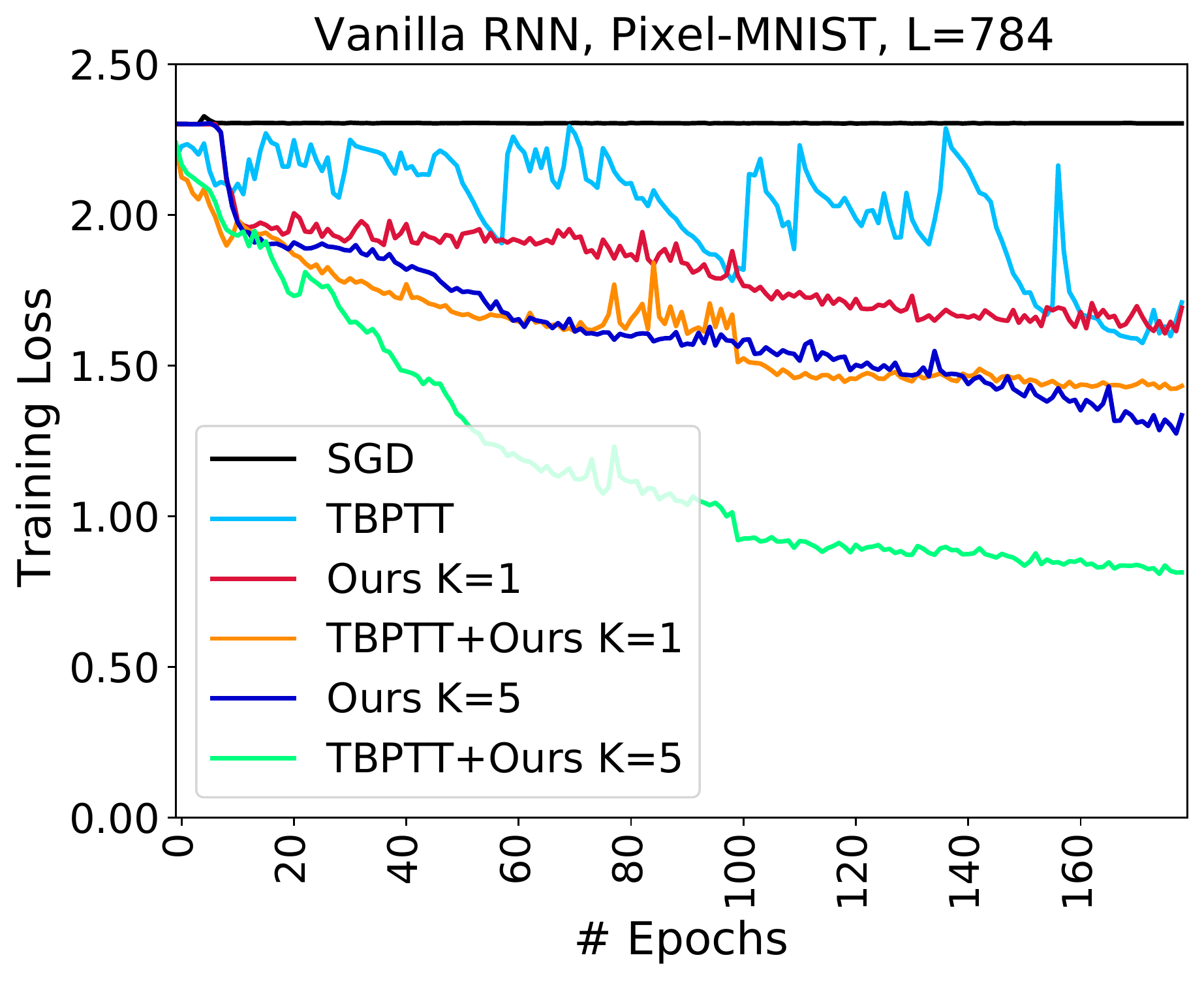}}
		\end{center}
	\end{minipage}
	\begin{minipage}[b]{0.49\columnwidth}
		\begin{center}
			\centerline{\includegraphics[height=3.5cm,keepaspectratio]{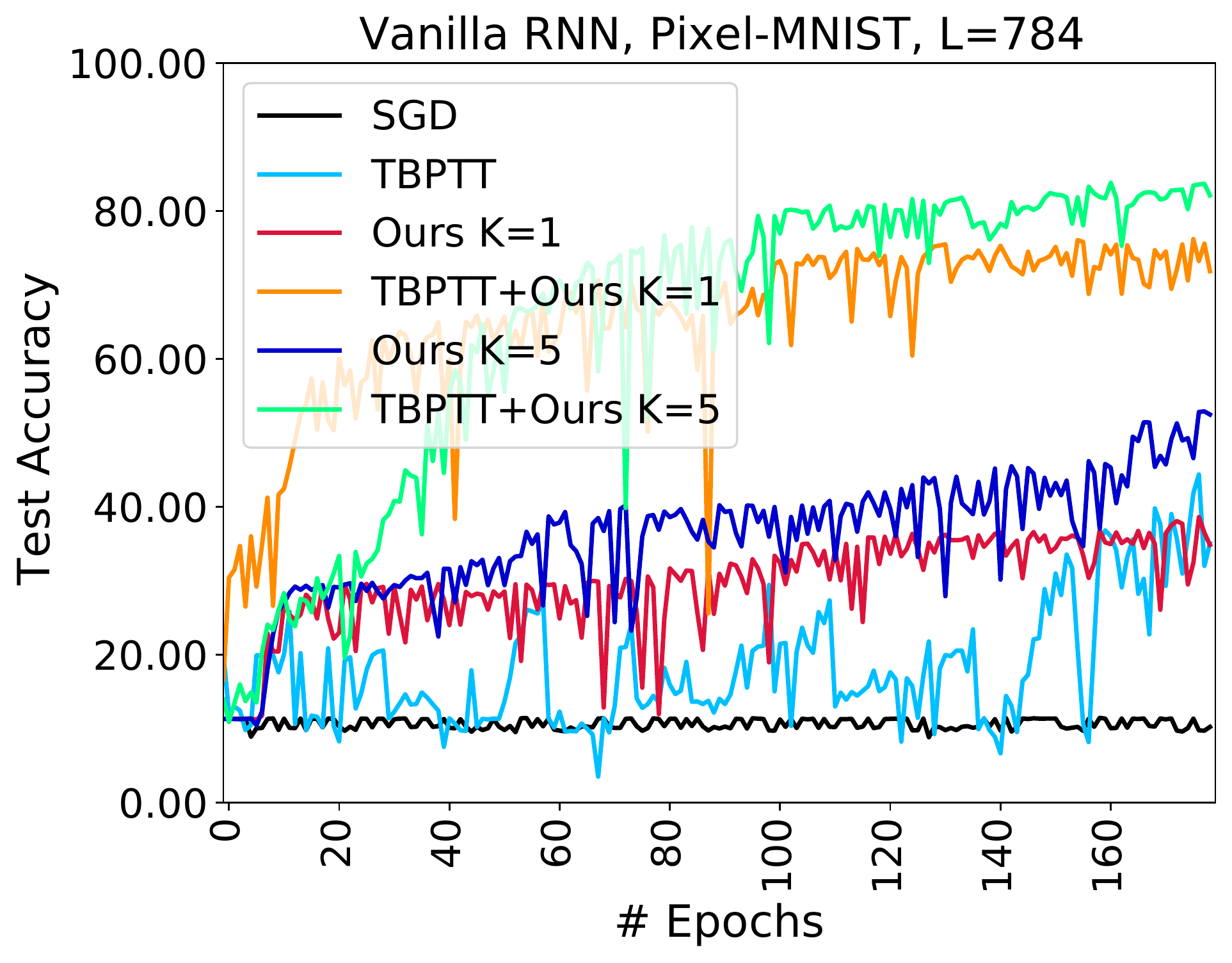}}
		\end{center}
	\end{minipage}
	\hfill
	\begin{minipage}[b]{0.49\columnwidth}
		\begin{center}
			\centerline{\includegraphics[height=3.5cm,keepaspectratio]{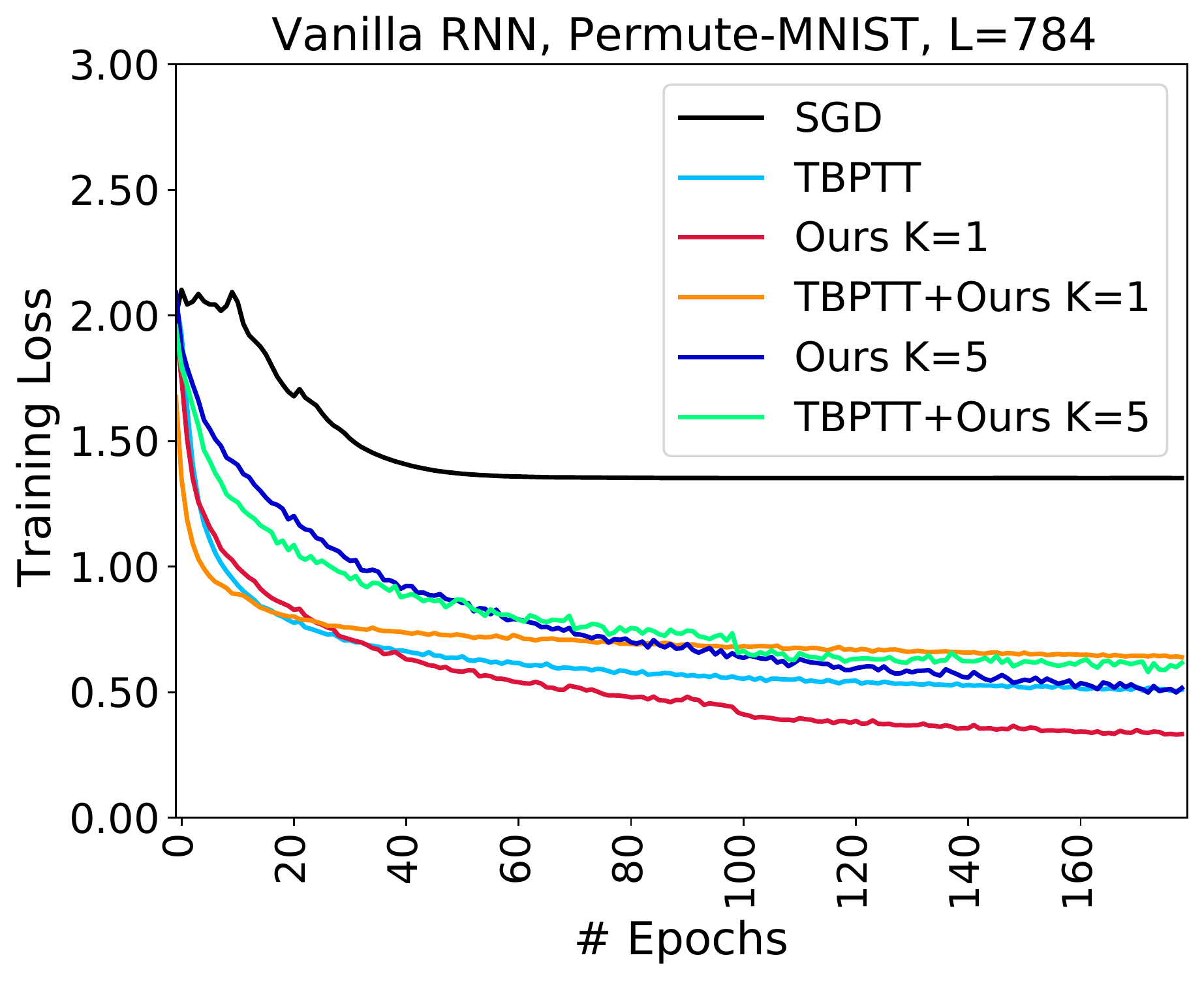}}
		\end{center}
	\end{minipage}
	\begin{minipage}[b]{0.49\columnwidth}
		\begin{center}
			\centerline{\includegraphics[height=3.5cm,keepaspectratio]{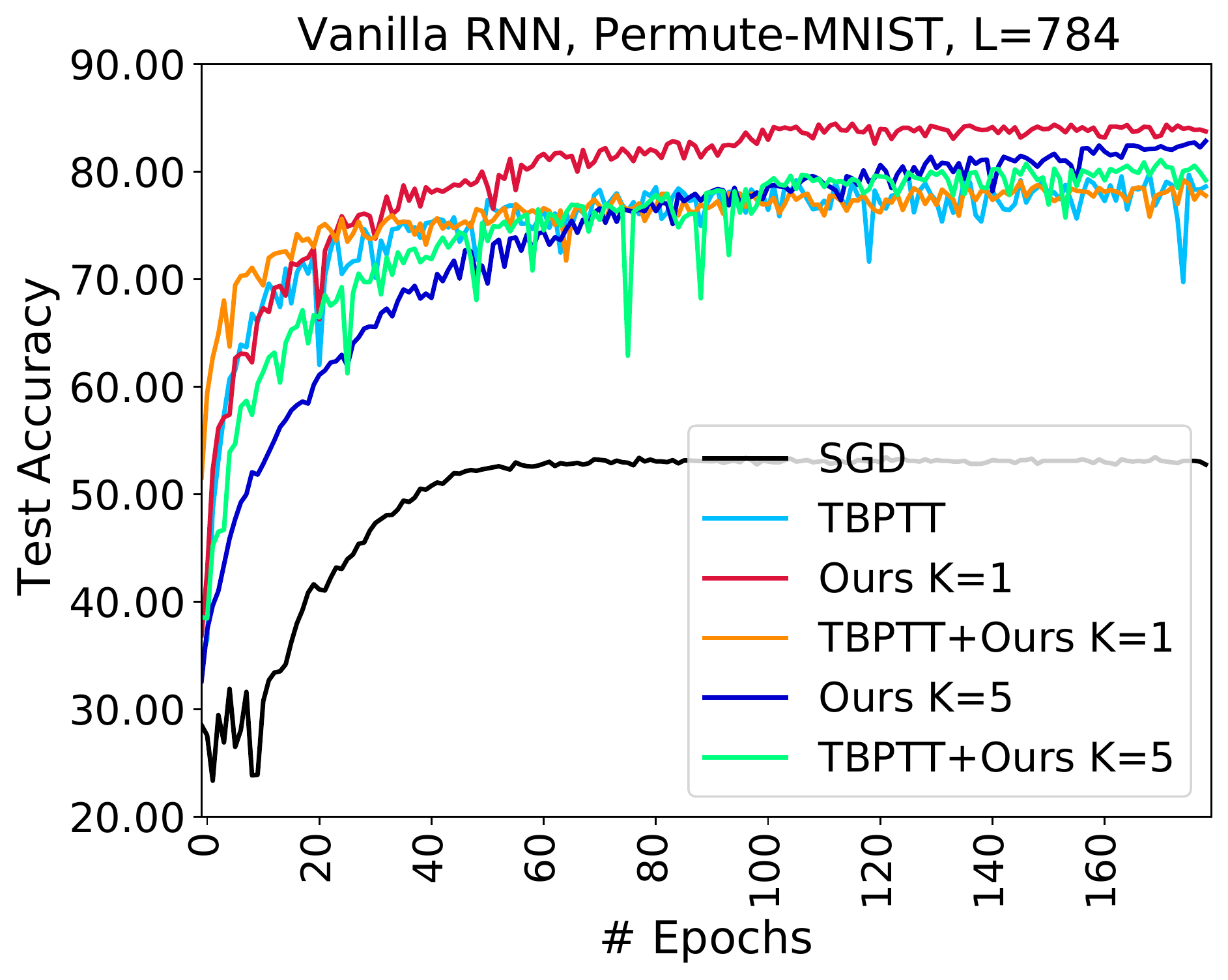}}
		\end{center}
	\end{minipage}
	
	\vspace{-5mm}
    \caption{\footnotesize Training loss and test accuracy on Pixel-MNIST and Permute-MNIST}

\label{fig:p1_pixel}
\end{figure}



\bfsection{IndRNN on Pixel-MNIST \& Permute-MNIST}
This experiment addresses the application of our algorithm on training deep models. Since IndRNN can be stacked to construct a deep network \cite{li2018independently}, we apply a six-layer IndRNN structure with the two benchmark datasets Pixel-MNIST and Permute-MNIST. The model has proved the state-of-the-art performance on the two datasets with Adam optimizer \cite{li2018independently}, thus we also use Adam as the baseline algorithm to compare with ours. The model structure we use follows exactly \cite{li2018independently}.The inner iteration $K$ is tested with values \{1, 5, 10, 30\}.

The training loss and test accuracy results of the two datasets trained with IndRNN are shown in Fig. \ref{fig:indRNN}. IndRNN applied batch normalization (BN) to accelerate Pixel-MNIST training. Due to the truncated training process of TBPTT, the relevant statistics over the mini-batch changes over time. It is not suitable to apply BN to TBPTT. Thus in Fig. \ref{fig:indRNN} we only compare our algorithm with the baseline IndRNN. Our algorithm has the same performance as Adam optimizer. 
In terms of accuracy, our algorithm achieves comparable test accuracy after 400 epochs using about half training time when $K$=5.

\begin{figure}[hbt!]
    \hfill
	\begin{minipage}[b]{0.45\columnwidth}
		\begin{center}
			\centerline{\includegraphics[width=\columnwidth]{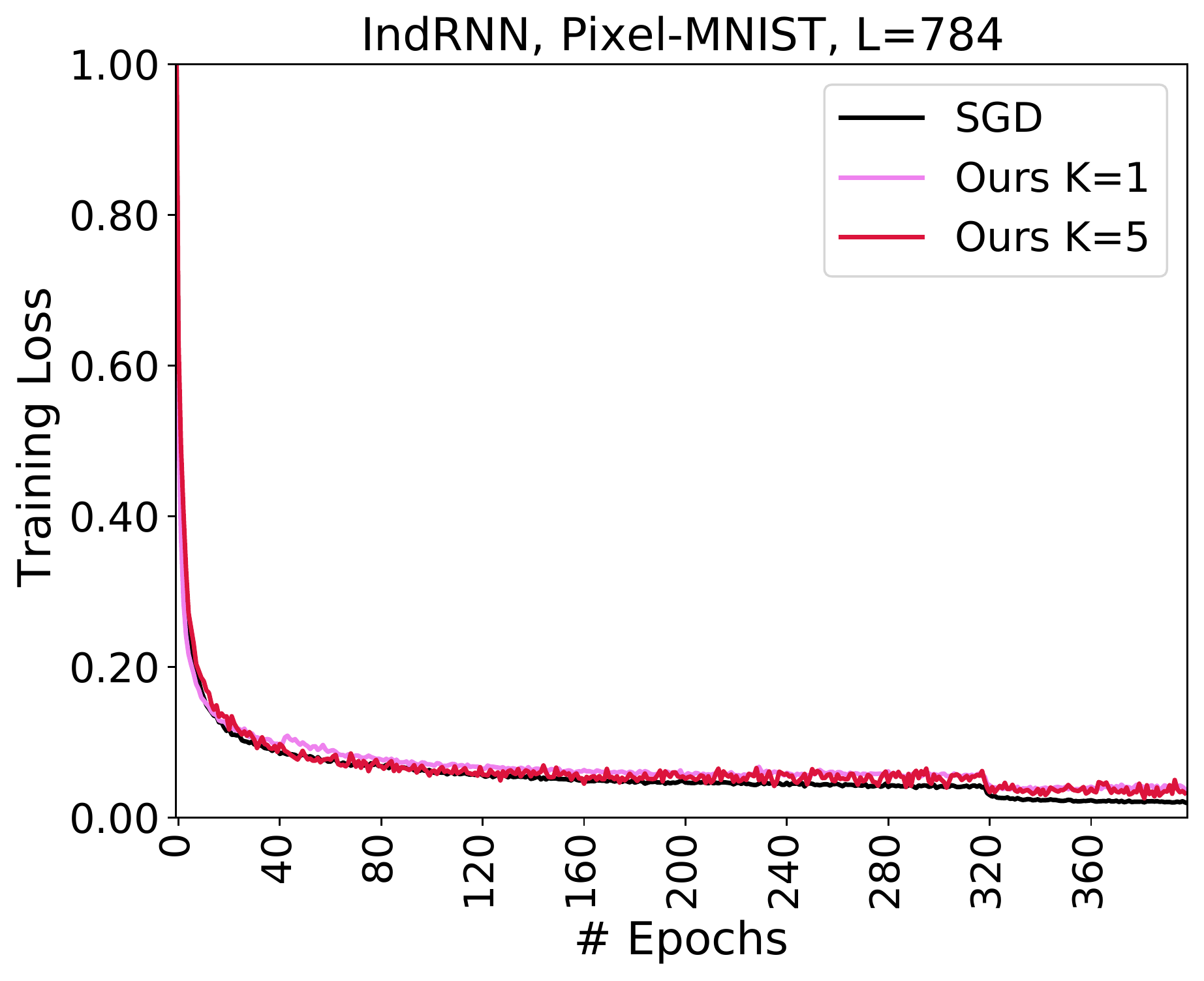}}
		\end{center}
	\end{minipage}
	\hfill
	\begin{minipage}[b]{0.45\columnwidth}
		\begin{center}
			\centerline{\includegraphics[width=\columnwidth]{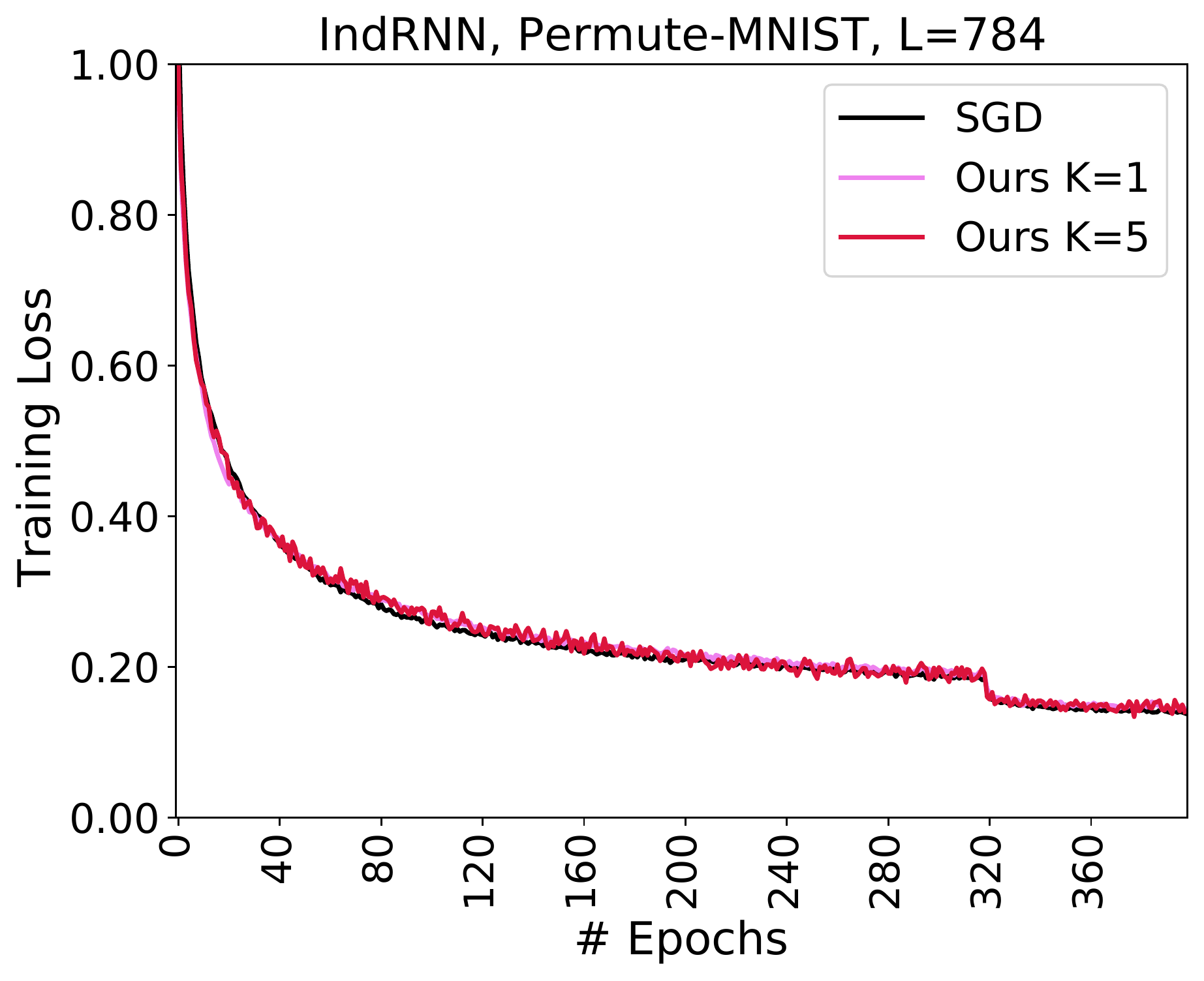}}
		\end{center}
	\end{minipage}
	\hfill
	\vspace{-5mm}
    \caption{\footnotesize Training loss on Pixel-MNIST and Permute-MNIST with indRNN}

\label{fig:indRNN}
\end{figure}

\begin{table}[hbt!]

  \def\arraystretch{0.8}%
  \begin{center}
    \caption{\footnotesize Test accuracy (\%) (training hours) of IndRNN}
    \label{tab:indRNN}
    \begin{tabular}{l c c c}
      \toprule 
      
         \multirow{2}{*}{\textbf{Dataset}}
 &  \multicolumn{3}{c}{\textbf{Acc. (Time)}}  \\
     \cmidrule(lr){2-4}
  & Baseline & Ours K=1 & Ours K=5 \\
      \midrule 
      Pixel-MNIST &
      98.88 (4.84) & 98.73 (3.46) & 98.82 (2.55)\\
      \midrule 
      Permute-MNIST
      & 93.00 (4.92) & 92.87 (3.68) & 92.59 (2.41)\\

      \bottomrule 
    \end{tabular}
  \end{center}

\end{table}

\bfsection{RNNs on HAR-2 \& Noisy HAR-2}
This experiment focuses on noise-free and noisy sequences. When the data samples are very noisy, RNNs usually exhibit unstable performance. To verify the robustness of our algorithm on noisy input, we compare the performance of the proposed optimizer with Adam and TBPTT on plain RNN with HAR-2 and Noisy-HAR-2 as input. The task is binary classification after observing a long sequence. HAR-2 has a 128 time sequence. Adding Gaussian noise with a mean of zero and a variance of two makes the task harder. We set forward and backward steps in TBPTT as 16. Thus the sequence is split into 16 segments. Same as Experiment 1, we also include the combination of TBPTT and our algorithm in the experiment. We also demonstrate the compatibility of our algorithm with LSTM and BN using the two datasets. We estimate the MSE of each experiment. 



\begin{figure}[t]
    \hfill
	\begin{minipage}[b]{0.45\columnwidth}
		\begin{center}
			\centerline{\includegraphics[width=\columnwidth]{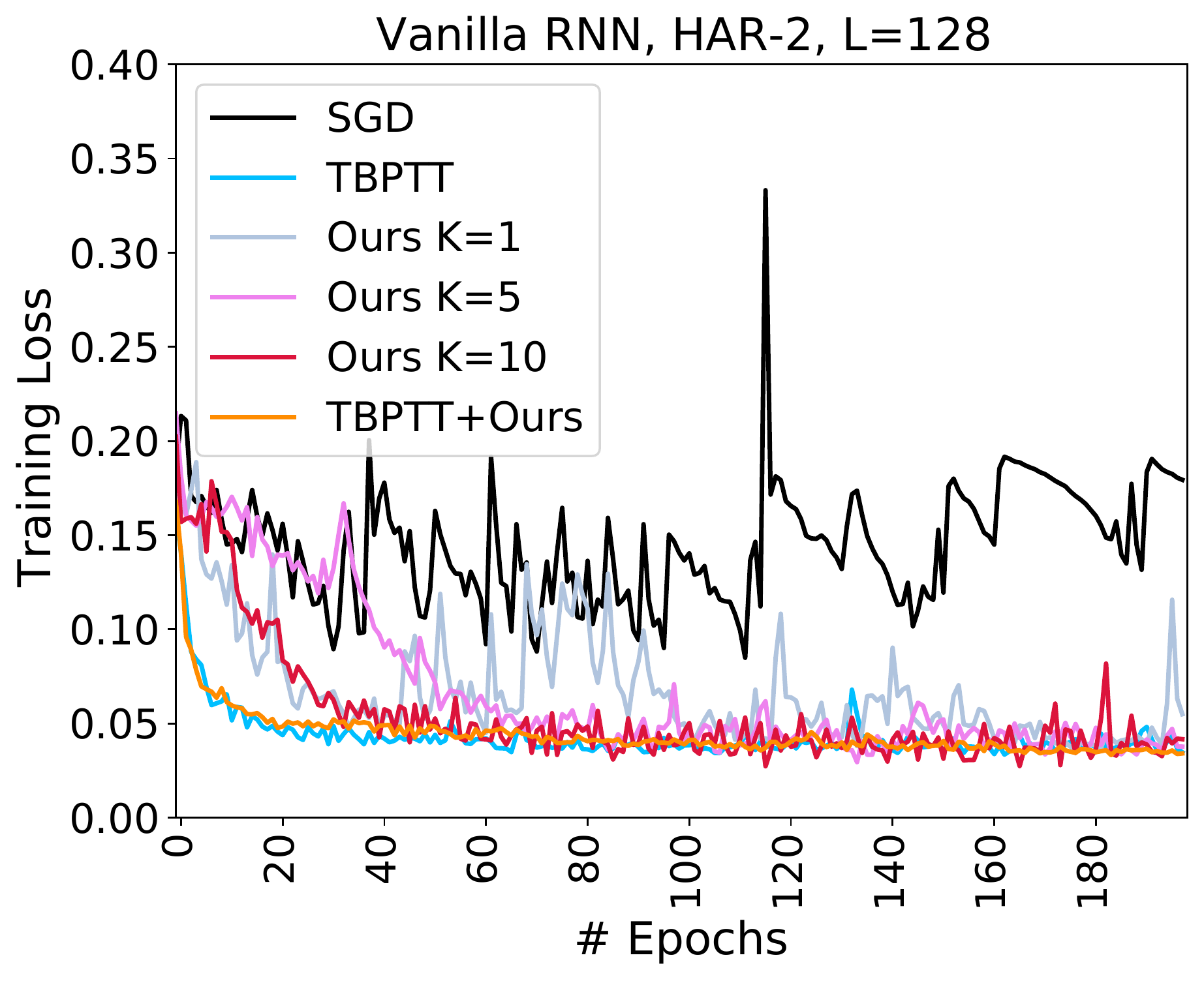}}
		\end{center}
	\end{minipage}
	\hfill
	\begin{minipage}[b]{0.45\columnwidth}
		\begin{center}
			\centerline{\includegraphics[width=\columnwidth]{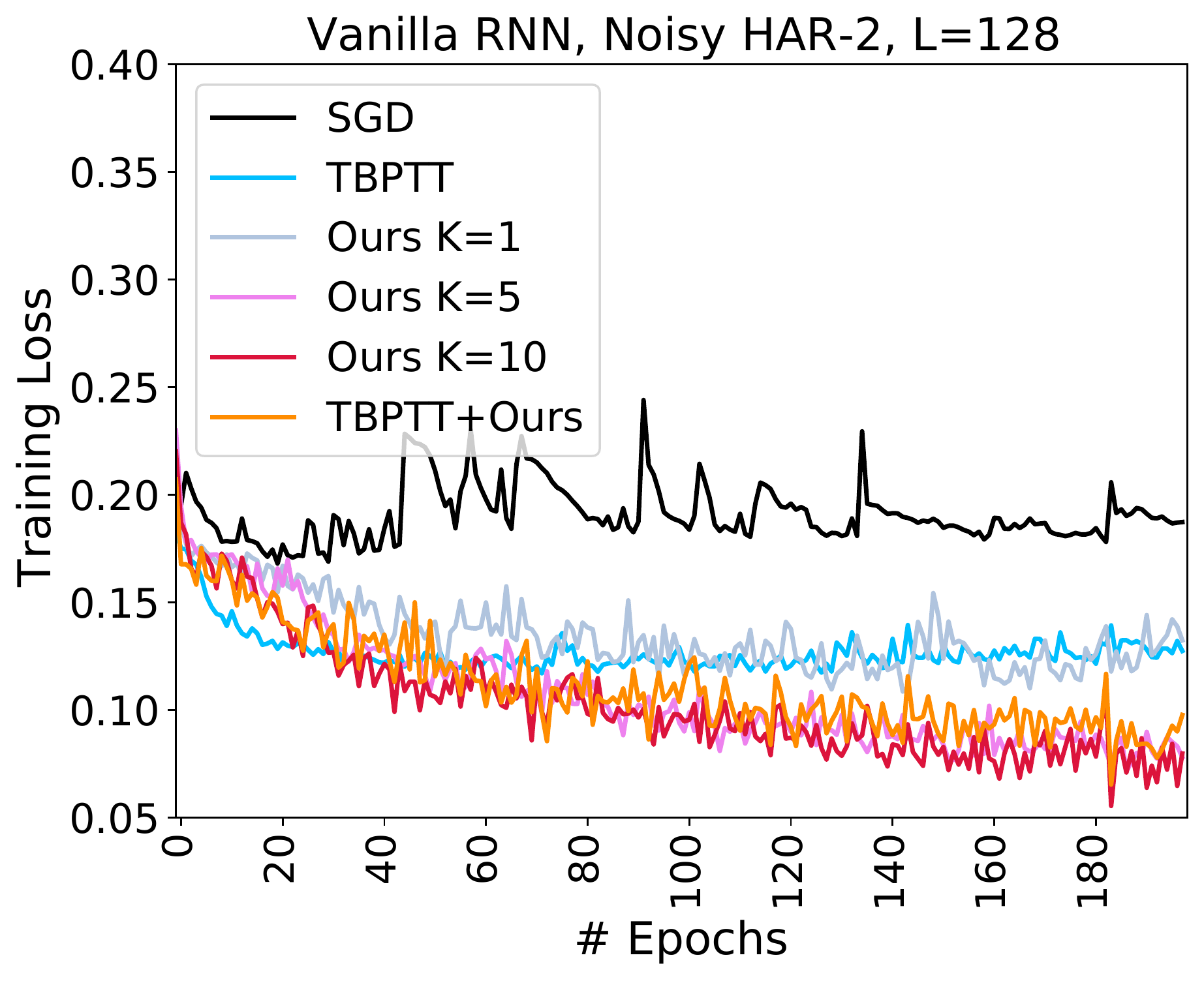}}
		\end{center}
	\end{minipage}
	\hfill
	\vspace{-5mm}
    \caption{\footnotesize Training loss of RNN on HAR-2 and Noisy-HAR-2}

\label{fig:har}
\end{figure}

\begin{table}[t]

  \caption{\footnotesize Test accuracy (\%) and training time (hr) of RNN}
  \label{tab:har}

  \centering

    \begin{tabular}{l c c c c }

      \toprule 
      \textbf{Method} & \textbf{HAR-2} & \textbf{Time} & \textbf {Noisy-HAR-2} & \textbf{Time}  \\
      \midrule 
      SGD & 87.66 & 0.17 & 74.38 & 0.17\\
      SGD+Clipping & 93.36 & 0.13 & 74.38 & 0.13 \\
      TBPTT & 93.62 & 0.38 & 86.20 & 0.56\\
      LSTM+Adam & 94.40 & 0.14 & 92.12 & 0.17\\
      Ours K=1 & 93.52 & 0.15 & 86.04 & 0.14 \\
      Ours K=5 & 94.11 & 0.14 & 89.36 & 0.14\\
      Ours K=10 & 93.65 & 0.37 & 89.52 & 0.35\\
      Ours+BN & 94.37 & 0.36 & 89.38 & 0.41\\ 
      TBPTT+Ours & 94.01 & 0.35 & 89.28 & 0.84\\
      LSTM+Ours & 94.95 & 0.19 & \textbf{92.41} & 0.42\\
      IndRNN & 95.73 & 0.46 & 91.20 & 0.45\\
      IndRNN+Ours & \textbf{96.55} & 0.13 & 92.15 & 0.17\\

      \bottomrule 
    \end{tabular}
\end{table} 

Fig. \ref{fig:har} and Table~\ref{tab:har} show that our algorithm is not only robust to long-term dependency tasks but also robust to noisy time sequences. No matter the input data is noise-free or noisy, the training losses of SGD oscillate up and down. The TBPTT reaches a desirable MSE in the noise-free case because the forward and backward steps are set up relatively short. TBPTT is capable of avoiding the vanishing gradient issue in this setting. The literature shows RNN with SGD usually reaches a test accuracy 91.31\% with the learning that takes at least 300 epochs on HAR-2 dataset \cite{kusupati2018nips}. With the same initial learning rate setting, our algorithm outperforms the baseline within less training epochs. It is shown in Table~\ref{tab:har} that gradient clipping does improve the model performance on noise-free data, however, it loses its advantage when applied to Noisy-HAR-2 dataset. When combining our algorithm with TBPTT, we reach the test accuracy 94.01\%. The baseline accuracy of SGD is 15.14\% less than our algorithm on the Noisy-HAR-2 task. We also verified that our optimizer works well with LSTM and BN as listed in Table~\ref{tab:har}. When our algorithm is combined with IndRNN, it overpasses the original IndRNN with less running time.

\section{Conclusions and future directions}\label{sec:conclusion}
In this paper, we propose a novel and simple RNN optimizer based on the Frank-Wolfe method. We provide a theoretical proof of the convergence of our algorithm. The empirical experiments on several benchmark datasets demonstrate that the proposed RNN optimizer is an effective solver for the training stability of RNNs. Our algorithm outperforms SGD in all experiments and boosts the TBPTT performances. It also reaches a comparable test accuracy with the baseline algorithm in deep RNNs while requiring fewer gradient updates and less training time. The algorithm shows robustness in the noisy data classification experiment with an improvement of 15.14\%. This work motivates the RNN training on a distributed system. In future work, we will investigate the application of our algorithm in a distributed setting which can reach significant speed-ups at no or nearly no loss of accuracy.






%



{\small
\bibliographystyle{IEEEtran}
\bibliography{root.bib}

}

\end{document}


\begin{defi}[Star-convexity \cite{zhou2018sgd}]
Let $\omega^*$ be a global minimizer of a smooth function $F$. Then, $F$ is said to be star-convex at a point $\omega$ provided that $F(\omega) − F(\omega^*) + \langle \omega^* − \omega, \nabla F(\omega)\rangle\leq 0, \forall \omega$.
\end{defi}
 
\begin{thm}[Convergence]
Let $\{\omega_t\}_{t\in[T]}$ be the sequence of the weight updates from Alg.~\ref{alg:fw-rnn}. Suppose 
\begin{itemize}[leftmargin=7mm]
    \item[A1.] $F$ in Eq. \ref{eqn:obj} is {\em locally convex} within each radius $\delta_t, \forall t\in[T]$ \wrt $\ell_2$ norm centered at $\omega_{t-1}$;
    
    \item[A2.] $F$ in Eq. \ref{eqn:obj} is also {\em star-convex}, given a global minimizer $\omega^*$, \ie $F$ is lower bounded by $F(\omega^*)$;
    
    \item[A3.] $F$ in Eq. \ref{eqn:obj} is differentiable and its gradient is Lipschitz continuous with constant $L > 0$, \ie $\|\nabla F(\omega_1) - \nabla F(\omega_2)\|_2 \leq L\|\omega_1 - \omega_2\|_2, \forall \omega_1, \omega_2$;
    
    \item[A4.] $\omega_t, \forall t$ is upper bounded \wrt $\omega^*$, \ie $\|\omega_t - \omega^*\|_2\leq \alpha < +\infty, \forall \omega, \exists \alpha$; 
    
    \item[A5.] It holds that $\beta\leq\|\nabla F(\omega_{t-1}) - \Delta\omega_{t,K}\|_2 \leq (1-L\eta)^{\frac{1}{2}}\|\Delta\omega_{t,K}\|_2, \forall t, \exists\beta>0, \exists\eta\leq\frac{1}{L}$. 
    
\end{itemize}
Then we have that the output of Alg. \ref{alg:fw-rnn}, $\omega_T$, satisfies
\begin{align}
    F(\omega_T) - F(\omega^*) \leq \frac{\|\omega_0 - \omega^*\|_2^2 + 2\eta C}{2\eta T}, 
\end{align}
where $C=\left(\frac{\alpha}{\beta}-\frac{\eta}{2}\right)(1 - L\eta)\sum_t\|\Delta\omega_{t,K}\|_2^2$, and $\omega_0$ denotes the initialization of the network weights. In particular, $\omega_T$ will converge to $\omega^*$ {\em asymptotically} if $\lim_{T\rightarrow+\infty}\sum_{t=1}^T\delta_t^2<+\infty$ holds. Further, if $-\frac{1}{2\eta}\|\omega_0 - \omega^*\|_2^2\leq C <+\infty$ holds, then $\omega_T$ will converge to $\omega^*$ {\em sublinearly}. 
\end{thm}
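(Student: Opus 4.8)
The plan is to combine two standard ingredients---a descent inequality that renders the iterates monotone and a distance-to-minimizer recursion that telescopes---while using A5 to control the gap between the search direction $\Delta\omega_{t,K}$ and the true gradient. Throughout I would write the update as $\omega_t = \omega_{t-1} - \eta\,\Delta\omega_{t,K}$ and set $e_t := \nabla F(\omega_{t-1}) - \Delta\omega_{t,K}$, so that A5 reads $\beta \le \|e_t\|_2 \le (1-L\eta)^{1/2}\|\Delta\omega_{t,K}\|_2$. First I would establish monotonicity: applying the descent lemma implied by the $L$-smoothness A3 to $\omega_t-\omega_{t-1}=-\eta\,\Delta\omega_{t,K}$ gives $F(\omega_t) \le F(\omega_{t-1}) - \eta\langle\nabla F(\omega_{t-1}),\Delta\omega_{t,K}\rangle + \frac{L\eta^2}{2}\|\Delta\omega_{t,K}\|_2^2$. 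Writing $\nabla F(\omega_{t-1}) = \Delta\omega_{t,K} + e_t$ and using $\langle e_t,\Delta\omega_{t,K}\rangle \ge -\|e_t\|_2\|\Delta\omega_{t,K}\|_2$ together with the upper bound in A5 yields $\langle\nabla F(\omega_{t-1}),\Delta\omega_{t,K}\rangle \ge \bigl(1-(1-L\eta)^{1/2}\bigr)\|\Delta\omega_{t,K}\|_2^2$; since $1-\sqrt{1-L\eta}\ge\frac{L\eta}{2}$ for $L\eta\le 1$, the per-step gain dominates the $\frac{L\eta^2}{2}$ error, so $F(\omega_t)\le F(\omega_{t-1})$. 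This monotonicity is exactly what later lets me lower-bound $\sum_t\bigl(F(\omega_{t-1})-F(\omega^*)\bigr)$ by $T\bigl(F(\omega_T)-F(\omega^*)\bigr)$.

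Next I would derive the telescoping recursion. Expanding $\|\omega_t-\omega^*\|_2^2 = \|\omega_{t-1}-\omega^*\|_2^2 - 2\eta\langle\Delta\omega_{t,K},\omega_{t-1}-\omega^*\rangle + \eta^2\|\Delta\omega_{t,K}\|_2^2$, I substitute $\Delta\omega_{t,K}=\nabla F(\omega_{t-1})-e_t$ and invoke star-convexity A2 at $\omega=\omega_{t-1}$ to obtain $\langle\nabla F(\omega_{t-1}),\omega_{t-1}-\omega^*\rangle \ge F(\omega_{t-1})-F(\omega^*)$. The residual cross term $\langle e_t,\omega_{t-1}-\omega^*\rangle$ is handled by Cauchy--Schwarz together with A4, giving $\le\alpha\|e_t\|_2$; the crucial conversion then uses \emph{both} halves of A5, namely $\|e_t\|_2 \le \|e_t\|_2^2/\beta \le (1-L\eta)\|\Delta\omega_{t,K}\|_2^2/\beta$, which is precisely what manufactures the factor $\frac{\alpha}{\beta}(1-L\eta)$ in $C$. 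Rearranging into $2\eta\bigl(F(\omega_{t-1})-F(\omega^*)\bigr) \le \|\omega_{t-1}-\omega^*\|_2^2-\|\omega_t-\omega^*\|_2^2 + (\text{terms in }\|\Delta\omega_{t,K}\|_2^2)$, summing over $t\in[T]$, telescoping the distance terms to $\|\omega_0-\omega^*\|_2^2$, dropping the nonnegative $\|\omega_T-\omega^*\|_2^2$, and inserting the monotonicity from the first step delivers the stated bound with $C=\bigl(\frac{\alpha}{\beta}-\frac{\eta}{2}\bigr)(1-L\eta)\sum_t\|\Delta\omega_{t,K}\|_2^2$.

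The two corollaries should then follow quickly. Because the per-step displacement is confined to the radius-$\delta_t$ region of A1, the norms $\|\Delta\omega_{t,K}\|_2$ are controlled by $\delta_t$; hence $\lim_{T\to\infty}\sum_t\delta_t^2<\infty$ forces $\sum_t\|\Delta\omega_{t,K}\|_2^2$, and therefore $C$, to remain finite, so the numerator stays bounded while the $1/T$ factor drives $F(\omega_T)-F(\omega^*)\to 0$, giving asymptotic convergence. For the sublinear claim, the lower bound $-\frac{1}{2\eta}\|\omega_0-\omega^*\|_2^2\le C$ guarantees that $\|\omega_0-\omega^*\|_2^2+2\eta C$ is a nonnegative finite constant, so the right-hand side is a genuine $O(1/T)$ quantity.

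I expect the main obstacle to be the bookkeeping that collapses the per-step error into exactly $\bigl(\frac{\alpha}{\beta}-\frac{\eta}{2}\bigr)(1-L\eta)\|\Delta\omega_{t,K}\|_2^2$: one must reconcile the $+\eta^2\|\Delta\omega_{t,K}\|_2^2$ produced by the distance expansion with the $\frac{L\eta^2}{2}\|\Delta\omega_{t,K}\|_2^2$ from the descent lemma and the $\frac{\alpha}{\beta}(1-L\eta)$ term from the error estimate, which is where the choice $\eta\le 1/L$ and the precise two-sided structure of A5 must be exploited. The subtle point is that A5 is used twice---its upper bound secures monotone descent, while its lower bound $\beta$ converts the linear Cauchy--Schwarz residual $\alpha\|e_t\|_2$ into the quadratic quantity that matches the definition of $C$.
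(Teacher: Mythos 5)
Your overall architecture (a monotonicity step, a telescoping distance recursion, and the two-sided use of A5 via $\|e_t\|_2\le\|e_t\|_2^2/\beta\le(1-L\eta)\|\Delta\omega_{t,K}\|_2^2/\beta$) is the same as the paper's, and your monotonicity argument via $1-\sqrt{1-L\eta}\ge\frac{L\eta}{2}$ is correct. However, the specific decomposition you chose cannot produce the stated constant, and the ``bookkeeping'' you defer at the end does not work out. Because you run the recursion at the \emph{old} iterate, your per-step bound is
\[
F(\omega_{t-1})-F(\omega^*)\le\frac{1}{2\eta}\Big(\|\omega_{t-1}-\omega^*\|_2^2-\|\omega_t-\omega^*\|_2^2\Big)+\Big(\tfrac{\alpha}{\beta}(1-L\eta)+\tfrac{\eta}{2}\Big)\|\Delta\omega_{t,K}\|_2^2,
\]
since the $\eta^2\|\Delta\omega_{t,K}\|_2^2$ term from expanding $\|\omega_t-\omega^*\|_2^2$ enters with a plus sign. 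The theorem needs the per-step coefficient $\big(\tfrac{\alpha}{\beta}-\tfrac{\eta}{2}\big)(1-L\eta)=\tfrac{\alpha}{\beta}(1-L\eta)-\tfrac{\eta}{2}+\tfrac{L\eta^2}{2}$, which is smaller than yours by $\eta\big(1-\tfrac{L\eta}{2}\big)\ge\tfrac{\eta}{2}>0$. Invoking monotonicity afterwards only converts the sum into $T\big(F(\omega_T)-F(\omega^*)\big)$; it cannot shrink this constant, so your argument proves the bound only with a strictly larger $C$, not the one claimed.

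The repair, which is what the paper actually does, requires two structural changes. First, chain the descent lemma \emph{into} the gap bound so that you estimate $F(\omega_t)-F(\omega^*)$ at the \emph{new} iterate: combine $F(\omega_t)\le F(\omega_{t-1})-\eta\langle\nabla F(\omega_{t-1}),\Delta\omega_{t,K}\rangle+\frac{L\eta^2}{2}\|\Delta\omega_{t,K}\|_2^2$ with star-convexity $F(\omega_{t-1})-F(\omega^*)\le\langle\nabla F(\omega_{t-1}),\omega_{t-1}-\omega^*\rangle$. This imports the term $\frac{L\eta^2-\eta}{2}\|\Delta\omega_{t,K}\|_2^2=-\frac{\eta}{2}(1-L\eta)\|\Delta\omega_{t,K}\|_2^2$, which is precisely where the $-\frac{\eta}{2}$ inside $C$ comes from. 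Second, do not split the error cross terms: with your $e_t=\nabla F(\omega_{t-1})-\Delta\omega_{t,K}$ they combine exactly as $\langle e_t,\omega_{t-1}-\omega^*\rangle-\eta\langle e_t,\Delta\omega_{t,K}\rangle=\langle e_t,\omega_t-\omega^*\rangle$, and a \emph{single} Cauchy--Schwarz with A4 plus the two-sided A5 gives $\langle e_t,\omega_t-\omega^*\rangle\le\frac{\alpha}{\beta}(1-L\eta)\|\Delta\omega_{t,K}\|_2^2$. Treating $\langle e_t,\omega_{t-1}-\omega^*\rangle$ and $-\eta\langle e_t,\Delta\omega_{t,K}\rangle$ by separate Cauchy--Schwarz estimates, as your plan implies, is strictly lossier: even after adding the descent lemma you would be left with an excess $\eta\sqrt{1-L\eta}\,\|\Delta\omega_{t,K}\|_2^2$ per step. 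With these two changes, your telescoping, monotonicity, and corollary arguments go through essentially verbatim and yield the stated $C$.
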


\begin{proof}
Based on Assmp. A1, A3 and A5, we have
\begin{align}\label{eqn:local-convexity}
    & F(\omega_t) \leq F(\omega_{t-1}) + \langle\nabla F(\omega_{t-1}), \omega_t-\omega_{t-1}\rangle + \frac{L}{2}\|\omega_t-\omega_{t-1}\|_2^2 \\
    &= F(\omega_{t-1}) - \eta\langle\nabla F(\omega_{t-1}), \Delta\omega_{t,K}\rangle + \frac{L\eta^2}{2}\|\Delta\omega_{t,K}\|_2^2
    \nonumber\\
    &= F(\omega_{t-1}) + \frac{\eta}{2}\|\nabla F(\omega_{t-1}) - \Delta\omega_{t,K}\|_2^2 - \frac{\eta}{2}\|\nabla F(\omega_{t-1})\|_2^2 + \frac{L\eta^2-\eta}{2}\|\Delta\omega_{t,K}\|_2^2\leq F(\omega_{t-1}). \nonumber
\end{align}
Further, based on Assmp. A2, A4 and A5, we have
\begin{align}\label{eqn:star-convexity}
    & F(\omega_t) - F(\omega^*) \nonumber \\ 
    &\leq \langle\nabla F(\omega_{t-1}), \omega_{t-1}-\omega^*\rangle - \frac{\eta}{2}\|\nabla F(\omega_{t-1})\|_2^2 + \frac{\eta}{2}\|\nabla F(\omega_{t-1}) - \Delta\omega_{t,K}\|_2^2 + \frac{L\eta^2-\eta}{2}\|\Delta\omega_{t,K}\|_2^2 \nonumber \\
    &= \frac{1}{2\eta}\Big(\|\omega_{t-1} - \omega^*\|_2^2 - \|\omega_t - \omega^*\|_2^2\Big) + \langle\nabla F(\omega_{t-1})-\Delta\omega_{t,K}, \omega_t-\omega^*\rangle + \frac{L\eta^2-\eta}{2}\|\Delta\omega_{t,K}\|_2^2 \nonumber \\
    &\leq \frac{1}{2\eta}\Big(\|\omega_{t-1} - \omega^*\|_2^2 - \|\omega_t - \omega^*\|_2^2\Big) + \left(\frac{\alpha}{\beta}-\frac{\eta}{2}\right)(1 - L\eta)\|\Delta\omega_{t,K}\|_2^2.
\end{align}
Now based on Eq. \ref{eqn:local-convexity} and Eq. \ref{eqn:star-convexity}, we can complete our proof by
\begin{align}
    F(\omega_T) - F(\omega^*) & \leq \frac{1}{T}\sum_t F(\omega_t) - F(\omega^*) \leq \frac{\|\omega_0 - \omega^*\|_2^2}{2\eta T} + \frac{1}{T}\left(\frac{\alpha}{\beta}-\frac{\eta}{2}\right)(1 - L\eta)\sum_t\|\Delta\omega_{t,K}\|_2^2. \nonumber
\end{align}
\end{proof}